\algrenewcommand\alglinenumber[1]{{\textcolor{gray}{\sf\scriptsize#1}}}
\newcommand\simplex{\triangle}
\newcommand{\bs}[1]{\boldsymbol{#1}}
\DeclareMathOperator{\prox}{\mathsf{prox}}
\DeclareMathOperator{\diag}{\mathsf{diag}}
\DeclareMathOperator{\proj}{\mathsf{proj}}
\DeclareMathOperator{\sgn}{\mathsf{sign}}
\DeclareMathOperator*{\softmax}{\mathsf{softmax}}
\DeclareMathOperator*{\sparsemax}{\mathsf{sparsemax}}
\DeclareMathOperator*{\gfm}{\mathsf{gfusedmax}}
\DeclareMathOperator*{\argmin}{arg\,min}
\newcommand\ww{w}
\newcommand\xx{z}
\newcommand\pp{p}
\newcommand\x{\bs{\xx}}
\newcommand\p{\bs{\pp}}
\newcommand\w{\bs{\ww}}
\newcommand{\norm}[1]{\|#1\|}
\newcommand\RR{\mathbb{R}}
\newcommand\Rd{\RR^d}
\newcommand\Rk{\RR^k}
\newcommand\id{\iota}
\newcommand*{\diffdchar}{\mathrm{d}}
\newcommand*{\dd}{\mathop{\diffdchar\!}}
\newtheorem{definition}{Definition}
\newtheorem{proposition}{Proposition}
\renewcommand{\paragraph}[1]{{\vspace{\baselineskip}\noindent\normalfont\bfseries#1}}
\def\x{{\mathbf x}}
\title{Sparse and Structured Visual Attention}
\name{Pedro Henrique Martins\textsuperscript{\Neptune} \qquad Vlad Niculae \textsuperscript{\Moon} \qquad Zita Marinho\textsuperscript{\Scorpio\Leo} \qquad Andr\'e F.~T. Martins\textsuperscript{\Neptune\Pluto\Saturn}}
  \address{\textsuperscript{\Neptune} Instituto de Telecomunica\c{c}\~{o}es \qquad
      \textsuperscript{\Moon} IvI, University of Amsterdam \qquad \textsuperscript{\Scorpio} Priberam Labs \\
      \textsuperscript{\Leo}Institute of Systems and Robotics \qquad \textsuperscript{\Pluto}LUMLIS (Lisbon ELLIS Unit) \qquad \textsuperscript{\Saturn}Unbabel}
\begin{document}

\begin{table*}
\textcopyright  2021 IEEE. Personal use of this material is permitted. Permission from IEEE must be obtained for all other uses, in any current or future media, including reprinting/republishing this material for advertising or promotional purposes, creating new collective works, for resale or redistribution to servers or lists, or reuse of any copyrighted component of this work in other works. 
\end{table*}

%
\maketitle
\begin{abstract}
Visual attention mechanisms are widely used in multimodal tasks, as visual question answering (VQA). One drawback of softmax-based attention mechanisms is that they assign some probability mass to all image regions, regardless of their adjacency structure and of their relevance to the text.
In this paper, to better link the image structure with the text, we replace the traditional softmax attention mechanism with two alternative sparsity-promoting transformations: 
{\it sparsemax}, which is able to select only the relevant regions (assigning zero weight to the rest), 
and a newly proposed {\it Total-Variation Sparse Attention} (\textsc{TVmax}), which further encourages the joint selection of adjacent spatial locations. 
Experiments in VQA show gains in accuracy as well as higher similarity to human attention, which suggests better interpretability.
\end{abstract}
\begin{keywords}
Attention, Structured Sparsity, Total Variation
\end{keywords}

\section{INTRODUCTION}
\label{sect:introduction}

Vision-language tasks, as visual question answering (VQA), require combining natural language understanding with object and scene recognition. 
While general purpose architectures can be powerful \cite{bahdanau2014neural,vaswani2017attention}, 
the ability to incorporate structural bias is a desirable feature to better link the language and vision components and produce more interpretable decisions. How can we encourage models to look at the relevant objects only, avoiding distractions?

The current state of the art for these tasks is based on deep neural networks with {\bf visual attention} \cite{bottom,tan2019lxmert,yu2019deep,chen2020uniter,jiang2020defense}. These models use attention mechanisms to select either grid features generated by convolutional neural networks (CNNs) pretrained on image recognition datasets \cite{krishna2017visual}, or CNN features of bounding boxes. 
While bounding boxes have the advantage that the attention mechanism can attend to full objects, they require an external object segmentation model, which has a computational cost. 
In this paper, we propose new {\bf selective visual attention mechanisms} over grid features, which owe their ability to select compact objects to the encouragement of joint selection of neighboring regions.

A key component of attention mechanisms is the transformation that maps scores into probability values, with softmax being the standard choice \cite{bahdanau2014neural}. A downside of softmax is that it is {\bf strictly dense}, {\it i.e.}, it devotes some attention probability mass to \emph{every} region in the image. This makes the model less interpretable and, for complex images, it may lead to a ``lack of focus''. 
This is visible in the example of Fig.~\ref{plot_att_vqa}: the model using softmax attends, always, to the entire image and, consequently, wrongly predicts that no one is crossing the bridge.

\begin{figure*}[t]
\begin{center}
\includegraphics[width=13cm]{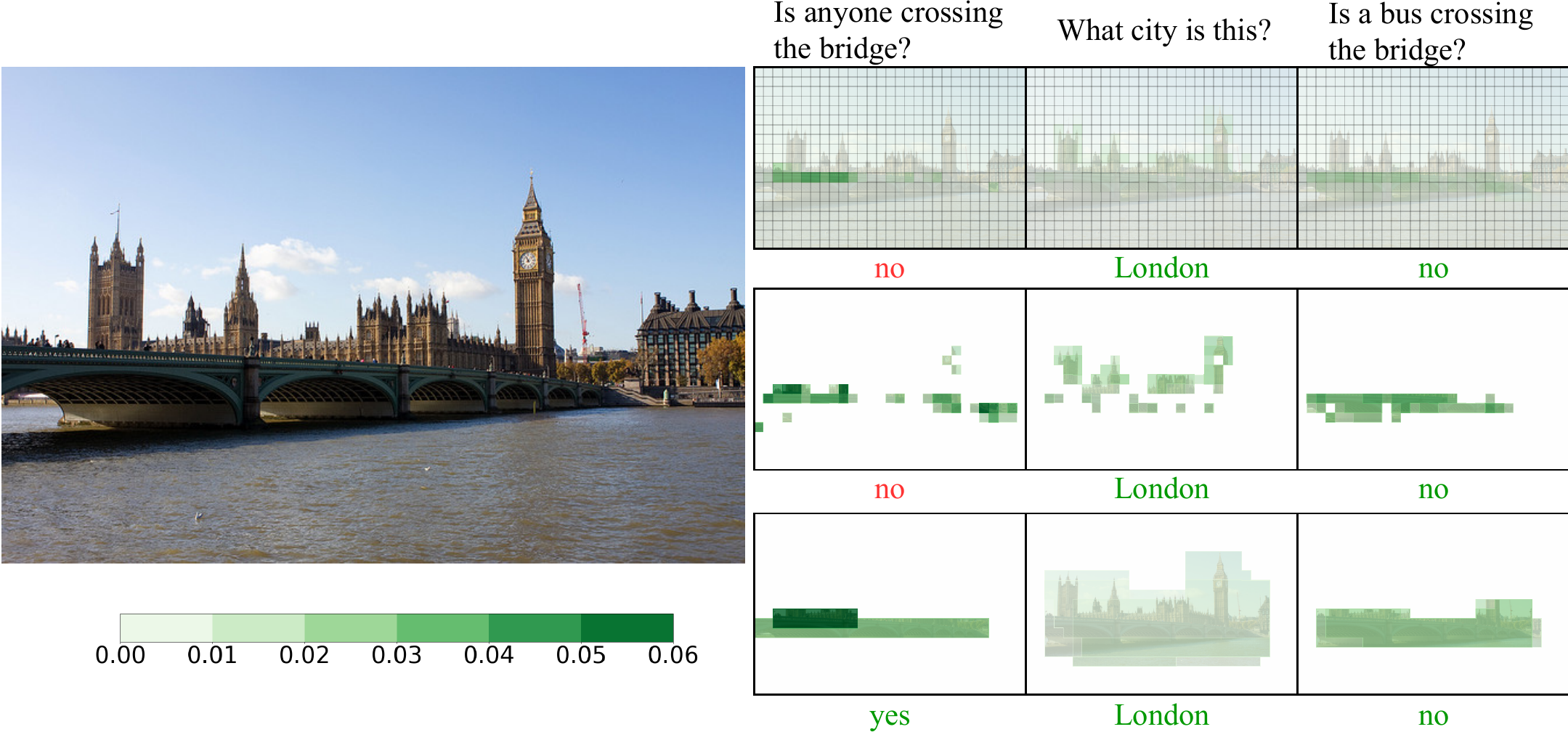}
\end{center}
\caption{VQA example using softmax (top), sparsemax
(middle), and the proposed attention mechanism: \textsc{TVmax} (bottom).}
\label{plot_att_vqa}
\end{figure*}

In this work, we introduce novel selective visual attention mechanisms by endowing them with a new capability: that of {\bf selecting only the relevant features of the image}. To this end, 
we first propose replacing softmax with {\bf sparsemax} \cite{sparsemax}. 
With sparsemax, the attention weights obtained are sparse, leading to the selection (non-zero attention) of only a few relevant features. 
While sparsemax has been applied successfully to NLP to attend over \emph{words} \cite{malaviya2018sparse,sparsemax,peters2019sparse}, its application to attention over \emph{image regions} is so far unexplored. 
However, as can be seen in the example of Fig.~\ref{plot_att_vqa}, despite leading to an increased focus on the relevant features, sparsemax selects discontiguous regions of the image which prevents the model from attending to full objects and reduces interpretability.

Thus, to further encourage the weights of related adjacent spatial locations to be the same
(\emph{e.g.}, parts of an object), we introduce a new attention mechanism: \textbf{Total-Variation Sparse Attention} (which we dub \textsc{TVmax}), inspired by prior work in {\bf structured sparsity} \cite{bach2012optimization,tibshirani2005sparsity}. Two key results of our paper (\S\ref{subsec: gfusedatt} and Propositions~\ref{prop:gfusedmax}--\ref{lemma:groups_}) show that \textsc{TVmax} can be evaluated by composing a proximal operator with a sparsemax projection, and that its Jacobian has a closed-form expression. This leads to an efficient implementation of its forward and backward passes. 

With \textsc{TVmax}, sparsity is allied to the ability of selecting \emph{compact} regions, improving interpretability, as shown in Fig.~\ref{plot_att_vqa}.
Experiments, in VQA, show that \textsc{TVmax} leads to improved accuracy while having attention maps more similar to human attention, suggesting higher interpretability.\footnote{Code available at \url{https://github.com/deep-spin/TVmax}}

\section{SELECTIVE ATTENTION}
\label{sec:att}

Attention mechanisms \cite{bahdanau2014neural} have the ability to dynamically attend to relevant input features, such as
regions of an image. 
To permit end-to-end training with gradient backpropagation, 
they require a differentiable mapping from importance scores $\bs{z} \in \mathbb{R}^k$ to a distribution 
$\bs{p} \in \simplex^k$, where
$
\textstyle \simplex^k \coloneqq \{ \boldsymbol { p } \in \mathbb { R
} ^ { k } \mid \sum^k_{i=1} p_i=1, \bs{p}\geqslant\bs{0}\}
$
 denotes the probability simplex.
The standard choice is the softmax transformation, defined as $[\softmax(\boldsymbol{z})]_i =\frac{\exp(z_i)}{\sum_j \exp(z_j) }.$
Since softmax is strictly positive, its output is {\bf dense}: it always assigns some probability mass to all image regions, even irrelevant ones. 
This accumulation of low probabilities may ``distract'' the model, preventing it from fully attending to the most relevant parts.
This motivates our proposed \textbf{selective}
visual attention mechanisms. 

\subsection{Sparsemax}
To achieve  selective capabilities, we propose the use of  {\bf sparsemax} \cite{sparsemax}, a sparse mapping consisting in
the Euclidean projection of
$\boldsymbol{z}$ onto the  
simplex:
$
\sparsemax( \boldsymbol { z } ) : =
\argmin_{\boldsymbol{p} \in \simplex^k}
\frac{1}{2}
\| \boldsymbol { p } - \boldsymbol { z } \|_2^2.
$
Sparsemax encourages sparse outputs, 
corresponding to the boundary of $\simplex^k$. 
This is an attractive property for visual attention mechanisms, where
often only a few features provide relevant information.

\subsection{{Sparse and Structured Visual Attention}}

Since the model, often, needs to identify the full objects present in the image, the selected regions should be encouraged to have a compact structure.
However, sparsemax is {\bf unstructured} and {\bf index-invariant}, leading it to select discontiguous
regions. To overcome this, we propose a new visual attention mechanism, {\bf \textsc{TVmax}}. \textsc{TVmax} is a (non-trivial) generalization of fusedmax \cite{vlad}, a 1D transformation based on fused lasso, to the 2D case.
For ease of exposition, we first describe how fused lasso is extended to arbitrary graphs, and then we particularize to the 2D case.

\label{subsec: gfused}
Let $\w \in \Rk$ be a vector of weights, and $(V, E)$ be an undirected graph, where $V = \{1, \ldots, k\}$ and $E \subseteq \{(i,j) \in V^2 \mid i < j\}$. 
The generalized fused lasso penalty \cite{tibshirani2005sparsity} is defined as 
$
\Omega_{E}(\w) = \sum_{(i,j) \in E} |\ww_i - \ww_j|.
$
Minimizing $\Omega_E$ encourages ``fused'' solutions, {\it i.e.}, it encourages $\ww_i =
\ww_j$ for $(i,j) \in E$. In particular, its proximal operator 
can be seen as a
\textbf{fused signal approximator}, seeking a vector $\w$ that approximates $\x$
well 
and that is encouraged to be fused:
\begin{equation}\label{eq:prox_fused_}
\prox_{\lambda \Omega_{E}}(\x) = \argmin_{\w \in \Rd} \frac{1}{2}\norm{\w - \x}_2^2 +
\lambda \Omega_E(\w). 
\end{equation}
 {Computing the value} of $\prox_{\lambda \Omega_E}$
is non-trivial in general \cite{gfl}, but for certain edge configurations 
efficient algorithms exist:
\begin{itemize}
    \item  If $E$ forms a chain, 
    the problem is called \textbf{1D total variation} and the penalty is defined as
    $\Omega^{TV}_{1D}(\bs{w}) \coloneqq \sum^{k-1}_{i=1} |w_{i+1}-w_i|.$
    It can be solved in $\mathcal{O}(k)$ time using the \emph{taut string algorithm} \cite{taut,sra}. We use the quasilinear
    algorithm of \cite{condat}, which is very fast in practice.

    \item If the indices are aligned on a 2D grid, as in an image, the problem is called \textbf{2D total variation} and the penalty is defined as $ \Omega^{TV}_{2D}(\bs{W}) \coloneqq \sum_i \Omega^{TV}_{1D}(\bs{w}_{i, :}) + \sum_j \Omega^{TV}_{1D}(\bs{w}_{:, j}),$
    where $\bs{w}_{i, :}$ and $\bs{w}_{:, j}$ denote the rows and columns of $\bs{W}$. Unlike the 1D case, exact algorithms are not available. However, for an input of size $a \times b$, it is possible to \emph{split} the penalty into $a$ column-wise and $b$ row-wise 1D problems, and apply iterative methods, as the proximal Dykstra algorithm~\cite{sra,copt}.
\end{itemize}

\textsc{TVmax} combines 2D total variation (TV2D) regularization with sparsemax. This way, it promotes sparsity and encourages the attention weights of adjacent spatial locations to be the same, selecting contiguous regions of the image. 
\begin{definition}[\textsc{TVmax}]
Let $\bs{z} \in \mathbb{R}^{k}$, such that the indices of $\bs{z}$ can be decomposed into rows and columns. 
The \textsc{TVmax} transformation is defined as 
\begin{equation}\label{eq:tvmax}
\textsc{TVmax}(\bs{z}) \coloneqq
\argmin_{\bs{p} \in \simplex^k}
\dfrac{1}{2} \|\boldsymbol{p}-\boldsymbol{z}\|_2^2
+ \lambda \Omega^{TV}_{2D}(\bs{p}),
\end{equation}
where $\lambda$ is a hyper-parameter controlling the amount of fusion
($\lambda=0$ recovers sparsemax) and $\Omega^{TV}_{2D}$ is the 2D TV penalty.
\end{definition}

\subsection{\textsc{TVmax}'s Forward and Backward Passes}
\label{subsec: gfusedatt}

In order to use the \textsc{TVmax} transformation as a component in a neural network, we need efficient forward and backpropagation algorithms. 
We will now derive these algorithms for a more general case, the 
\textbf{generalized fused sparse attention}.
We follow \cite{vlad} and define
\begin{equation}
\label{eq:gfusedatt}
\operatorname{\mathsf{gfusedmax}}_E(\x) \coloneqq
\argmin_{\p \in \simplex^k} \norm{\p - \x}_2^2 + \lambda \Omega_E(\p).
\end{equation}
This can be seen as a \emph{constrained} fused lasso approximator, because the
solution $\p$ must be a probability distribution vector.
While the optimization function is very similar to Eq.~\ref{eq:prox_fused_},
note the additional constraint $\p \in \simplex^k$.
Fortunately, the following result holds.

\smallskip

\begin{proposition}\label{prop:gfusedmax}
The generalized fusedmax can be expressed as
\label{prop:forward_}
$\operatorname{\mathsf{gfusedmax}}_E(\x) = \proj_{\simplex^k}\left(\prox_{\lambda
\Omega_{E}}(\x)\right).$
\end{proposition}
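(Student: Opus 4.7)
The plan is to prove the identity by matching KKT conditions. Writing $\w^\star = \prox_{\lambda \Omega_{E}}(\x)$ and $\tilde{\p} = \proj_{\simplex^k}(\w^\star)$, I would verify that $\tilde{\p}$ satisfies the optimality conditions of the simplex-constrained program in Eq.~\ref{eq:gfusedatt}. Since that objective is strongly convex, matching KKT is enough to identify $\tilde{\p} = \operatorname{\mathsf{gfusedmax}}_E(\x)$.

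First I would write the two stationarity conditions explicitly. The prox definition gives $\w^\star = \x - \lambda g'$ for some $g' \in \partial \Omega_E(\w^\star)$. The simplex projection, characterized by $\tilde{p}_i = [w^\star_i - \tau]_+$ for the unique $\tau \in \mathbb{R}$ satisfying $\mathbf{1}^\top \tilde{\p} = 1$, rewrites in KKT form as $\tilde{\p} - \w^\star + \tau \mathbf{1} - \bs{\sigma} = 0$ with $\bs{\sigma} \geq 0$ and $\sigma_i \tilde{p}_i = 0$. Eliminating $\w^\star$ yields
\[
\tilde{\p} - \x + \lambda g' + \tau \mathbf{1} - \bs{\sigma} = 0,
\]
which (up to absorbing the $\tfrac{1}{2}$ factor between the two objectives into $\lambda$) is precisely the KKT stationarity for gfusedmax, \emph{provided} $g' \in \partial \Omega_E(\tilde{\p})$.

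The main obstacle is therefore the subgradient-transfer claim $\partial \Omega_E(\w^\star) \subseteq \partial \Omega_E(\tilde{\p})$. Here I would exploit two structural properties of $\Omega_E$. First, it is shift-invariant, so $\partial \Omega_E(\w^\star - \tau\mathbf{1}) = \partial \Omega_E(\w^\star)$; the $\tau\mathbf{1}$ offset is invisible to the subdifferential. Second, since $\Omega_E(\w) = \sum_{(i,j) \in E} |w_i - w_j|$, its subdifferential is parameterized by edge signs $s_{ij} \in [-1,1]$ forced to equal $\sgn(w_i - w_j)$ whenever $w_i \neq w_j$. The crucial observation is that $\tilde{p}_i = [w^\star_i - \tau]_+$ is monotone: if $w^\star_i > w^\star_j$ then $\tilde{p}_i \geq \tilde{p}_j$, with strict inequality unless both coordinates are clipped to zero. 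Consequently, any edge sign forced at $\w^\star$ remains admissible at $\tilde{\p}$ (either forced with the same value when $\tilde{p}_i \neq \tilde{p}_j$, or freely chosen in $[-1,1]$ when $\tilde{p}_i = \tilde{p}_j$), so $g' \in \partial \Omega_E(\tilde{\p})$ and the argument closes. I expect this monotonicity-based subdifferential inclusion to be the delicate step, because it is where the specific fused-lasso structure of $\Omega_E$---not just abstract convexity---actually enters.
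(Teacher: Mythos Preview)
Your proof is correct and self-contained, but it takes a different route from the paper's. The paper does not argue directly with KKT conditions; instead it reduces the claim to known prox-decomposition results, citing Proposition~2 of \cite{vlad} (the 1D fusedmax case) and Corollary~4 of \cite{decomposing}, and noting that the simplex indicator $\id_{\simplex}$ is permutation-symmetric. In other words, the paper invokes a general theorem guaranteeing $\prox_{f+g} = \prox_f \circ \prox_g$ once the relevant structural hypothesis is checked, whereas you verify the decomposition by hand for this specific pair.

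The trade-offs are the expected ones. The paper's argument is shorter but opaque unless the reader already knows the cited results; your KKT verification is elementary and makes explicit where the fused-lasso structure is used---namely in the monotonicity step that transfers the subgradient $g'$ from $\w^\star$ to $\tilde{\p}$. That step is essentially the concrete instance of the abstract subdifferential-inclusion condition underlying Yu's decomposition theorem, so the two proofs are morally the same computation viewed at different levels of generality. One small remark: your parenthetical about the missing $\tfrac12$ in Eq.~\ref{eq:gfusedatt} is right to flag; the discrepancy is a harmless scaling of $\lambda$ and does not affect the decomposition itself.
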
 

\begin{proof}
This result is an extension of Proposition~2 in \cite{vlad}, and also follows
from Corolary~4 of \cite{decomposing}. By taking $f = \id_{\simplex}$,\footnote{$\id_{\simplex}$ is the indicator function of set $\simplex$.} and
noting that $\id_{\simplex}$ is symmetric: if $\p \in \simplex$, then any
vector $\p'$ obtained by permuting $\p$ is also in $\simplex$, because its
values remain non-negative and sum to 1.
\end{proof}

Proposition~\ref{prop:forward_} shows that $\mathsf{gfusedmax}$'s forward pass 
can be computed {simply by composing the proximal step of fused lasso with the forward pass of sparsemax.} 
 It also provides a shortcut for deriving the
Jacobian of $\mathsf{gfusedmax}$ via the \emph{chain rule}.
Denoting by
$  \bs{J}_F$ the Jacobian of $\prox_{\lambda \Omega_E}$, we have:
\begin{equation}
\dfrac{\partial \gfm }{\partial \bs{z}}
= \bs{J}_{\sparsemax}(\prox_{\lambda \Omega_E}(\bs{z})) \bs{J}_{F}(\bs{z}).
\label{jacobian}
\end{equation}
$\bs{J}_{\sparsemax}$ has been derived by \cite{sparsemax}: $\bs{J}_{\sparsemax}(\boldsymbol{z}) = \diag \bs{s} - \frac{1}{\|\bs{s}\|_1} \bs{ss}^\top$, where $s_j = 1$ if $\sparsemax(\xx)_j > 0$ and $s_j = 0$ otherwise.
The next proposition completes the puzzle, giving a full characterization of $\bs{J}_F$.

\smallskip

\begin{proposition}[Group-wise characterization of $\prox_{\lambda \Omega_E}$]
\label{lemma:groups_}
Let $\bs{w}^\star \coloneqq \prox_{\lambda \Omega_E}(\bs{z})$, and denote by $G_i$
the set of indices fused to $\ww_i$ in the solution, defined recursively:
\begin{enumerate}
\item 
$i \in G_i$ for all $i$, and
\item 
$j \in G_i$ $\exists m \in G_i$ such that edge $(m,j) \in E$ and $\ww^\star_m =
\ww^\star_j$. 
\end{enumerate}
Define $s_{ij} = \sgn(\ww^\star_i - \ww^\star_j)$. Then, we have
\begin{equation}\label{eq:groups_}
\ww^\star_i = \frac{1}{|G_i|}
\sum_{j \in G_i}
\left(
\xx_j
+ \sum_{\substack{(m,j) \in E, \\ m \not\in G_i}} \lambda s_{mj}
- \sum_{\substack{(j,m) \in E, \\ m \not\in G_i}} \lambda s_{jm}\right). 
\end{equation}
\end{proposition}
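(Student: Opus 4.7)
The approach is to read off the per-coordinate first-order optimality conditions of the (strongly convex) proximal problem in Eq.~\ref{eq:prox_fused_}, then \emph{aggregate} them over a fixed fused group $G_i$, using the fact that every edge with both endpoints inside $G_i$ contributes cancelling subgradient terms so that only boundary edges survive.

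Concretely, the proximal objective is strongly convex, so $\bs{w}^\star$ is its unique minimizer, characterized by $\bs{0} \in \bs{w}^\star - \bs{z} + \lambda\,\partial \Omega_E(\bs{w}^\star)$. Computing the subdifferential edge by edge and isolating the $j$-th coordinate, this reads
\begin{equation*}
\xx_j - \ww^\star_j = \lambda \sum_{(j,m) \in E} s_{jm} - \lambda \sum_{(m,j) \in E} s_{mj},
\end{equation*}
where $s_{ab} = \sgn(\ww^\star_a - \ww^\star_b)$ whenever $\ww^\star_a \neq \ww^\star_b$, and $s_{ab} \in [-1,1]$ is a subgradient selection satisfying $s_{ba} = -s_{ab}$ when $\ww^\star_a = \ww^\star_b$.

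I would then fix a group $G_i$ and sum this identity over $j \in G_i$. Since all elements of $G_i$ share the common value $\ww^\star_i$, the left-hand side becomes $\sum_{j \in G_i} \xx_j - |G_i|\,\ww^\star_i$. On the right, splitting each inner sum by whether $m \in G_i$ or $m \notin G_i$, any edge $(a,b) \in E$ with both endpoints in $G_i$ (necessarily with $\ww^\star_a = \ww^\star_b$) contributes $-\lambda s_{ab}$ from the $(j,m)$-sum at $j = a,\ m = b$ and $+\lambda s_{ab}$ from the $(m,j)$-sum at $j = b,\ m = a$, and these cancel regardless of the chosen subgradient. Only edges with exactly one endpoint in $G_i$ survive, and solving for $\ww^\star_i$ produces the claimed formula.

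The main obstacle is purely bookkeeping: keeping the orientation $(a,b)$ with $a < b$ consistent when dispatching the subgradient of $|\ww_a - \ww_b|$ to its two endpoints, so that the anti-symmetry $s_{ab} = -s_{ba}$ lines up the internal-edge contributions to cancel exactly. The potentially delicate point — that the subdifferential of $|\cdot|$ at zero is set-valued — causes no trouble, because the same $s_{ab}$ is shared by both endpoints of an internal edge; meanwhile, for every edge surviving in the final formula exactly one endpoint lies in $G_i$ and the other outside, so the sign $\sgn(\ww^\star_i - \ww^\star_j)$ is unambiguous.
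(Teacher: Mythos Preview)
Your proposal is correct and follows essentially the same route as the paper: write the per-coordinate subgradient optimality conditions of the proximal problem, sum them over a fused group $G_i$, observe that internal-edge subgradient terms cancel in pairs, and divide by $|G_i|$. Your additional remarks that every internal edge of $G_i$ has equal endpoint values (so the cancellation is automatic whatever subgradient is chosen) and that every surviving boundary edge has strictly different endpoint values (so $s_{ij}$ is an honest sign) make explicit two points the paper leaves implicit, but the argument is the same.
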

\begin{proof}
 The subgradient optimality conditions of Eq.~\ref{eq:prox_fused_} are \cite{pathwise}:
\begin{equation}\label{eq:subgrad}
\ww^\star_i - \xx_i 
+ \sum_{(i,k) \in E} \lambda t_{ik} 
- \sum_{(k,i) \in E} \lambda t_{ki} 
= 0
\qquad 1 \leq i \leq d.
\end{equation}
where $t_{ij} = s_{ij}$ if $\ww^\star_i \neq
\ww^\star_j$, otherwise $t_{ij}$ is a free variable in $[-1, 1]$.
We focus on a single group $G = G_i$.
Within a fused group, the solution is constant, {\it i.e.}, $\ww^\star_j = \ww$ for $j
\in G$. We separate the sums in Eq.~\ref{eq:subgrad} according to whether $k \in
G$ or not, and move the ``constant'' terms to the right hand side, yielding 
\begin{align}\label{eq:subgrad_group}
\ww 
+ \!\!\! \sum_{\substack{(j,k) \in E \\k \in G}} \!\! \lambda t_{jk}
- \!\!\! \sum_{\substack{(k,j) \in E \\k \in G}} \!\! \lambda t_{kj}
= \! \xx_j
- \!\!\! \sum_{\substack{(j, k) \in E \\k \not\in G}} \!\! \lambda s_{jk}
+ \!\!\! \sum_{\substack{(k, j) \in E \\k \not\in G}} \!\! \lambda s_{kj},
\end{align}
for $j \in G$. Summing up
the Eq.~\ref{eq:subgrad_group} over all $j \in G$, we observe that for any edge $(i,j) \in E$ with $i,j \in G$, the term
$\lambda t_{jk}$ appears twice with opposite signs
(as in Eq. 9 in \cite{viallon:hal-00813281}). Thus,
\begin{equation}
\sum_{j \in G} \ww =
\sum_{j \in G} \left(
\xx_j
+ \sum_{\substack{(k,j) \in E \\k \not\in G}} \lambda s_{kj}
- \sum_{\substack{(j,k) \in E \\k \not\in G}} \lambda s_{jk}
\right).
\end{equation}
Dividing by $|G|$ gives exactly Eq.~\ref{eq:groups_}. This reasoning applies to
any group $G_i$.
\end{proof}
Proposition~\ref{lemma:groups_} enables easy computation of a generalized Jacobian of
$\gfm$:
since small perturbations in $\boldsymbol{z}$ never change the groups $G_i$
nor the signs of across-group differences $s_{ij}$, differentiating
Eq.~\ref{eq:groups_} yields
\begin{equation}\label{eq:grad_}
({\bs{J}_F})_{i,j} = \frac{\partial \ww^\star_i}{\partial \xx_j} = \begin{cases}
\frac{1}{|G_i|}, & j \in G_i, \\
0, &j \not \in G_i. \\
\end{cases}
\end{equation}
This generalizes Lemma~1 of \cite{vlad} to arbitrary graphs.

\begin{table*}[t]
\centering \small
\begin{tabular}
{l@{\hspace{3.5ex}}l@{\hspace{4ex}}c@{\hspace{3ex}}c@{\hspace{3ex}}c@{\hspace{3ex}}c@{\hspace{5ex}}c@{\hspace{3ex}}c@{\hspace{3ex}}c@{\hspace{3ex}}c}
\toprule
& & \multicolumn{4}{c}{Test-Dev} & \multicolumn{4}{c}{Test-Standard}\\
\midrule
&  & Y/N & Numb. & Other & Overall & Y/N & Numb. & Other & Overall \\
\midrule
\multirow{2}{*}{bounding boxes} & softmax & 85.14 & 49.59 & \underline{58.72} & 68.57 & 85.56 & 49.54 & \underline{59.11} & 69.04\\
& sparsemax & \underline{85.41} & \underline{50.29} & 58.62 & \underline{68.71} & \underline{85.80} &\underline{50.18} & 59.08 & \underline{69.19}\\
\midrule
\multirow{3}{*}{grid} & softmax & 86.88 & 52.61 & 60.15 & 70.31 & 86.94 & 52.88 & 60.36 & 70.56 \\
& sparsemax & 86.61 & 52.28 & 60.04 & 70.11 & 86.77 & 52.66 & 60.14 & 70.40\\
& \textsc{TVmax} & \textbf{86.92} & \textbf{53.19} & \textbf{60.22} & \textbf{70.42} & \textbf{86.98} & \textbf{53.08} & \textbf{60.56} & \textbf{70.70} \\
\bottomrule
\end{tabular}
\caption{VQA accuracy (per-type and overall) on VQA-2.0 dataset using bounding box features or grid features as input. }
\label{results_vqa}
\end{table*}

\paragraph{Computation. }
\begin{algorithm}[tb]
\small\setstretch{.8}
    \caption{\textsc{TVmax} backward pass}
  \label{alg}
  \begin{algorithmic}
    \State \textbf{Input:} $\p = \textsc{TVmax}(\x)$, $\dd\p \in \RR^k$.
    \State \textbf{Output:} $\dd\x = \bs{J}_{\textsc{TVmax}}^\top (\dd\p) ~ \in \RR^k$
    \State \textbf{Initialize:} $N\leftarrow \varnothing$, $V\leftarrow \varnothing$, $G\leftarrow \varnothing$, $s=0$  
    \State $\dd\w \leftarrow (\bs{J}_{\sparsemax})^\top \dd\p$
    \While{$|\bs{V}| < k$} 
        \State \textbf{pick} $(i_0, j_0) \not\in V$, \textbf{push} $(i_0, j_0)$ \textbf{to} $N$
        \While{$N$ not empty} 
            \State \textbf{pop} $(i, j)$ \textbf{from} $N$
            \If{$\pp_{i,j}=\pp_{i_0, j_0}$} 
                \State $G \leftarrow G \cup \{(i,j)\}, V \leftarrow V \cup \{(i,j)\}$, $s \leftarrow s+ (\dd\w)_{i,j}$ 
                \ForAll{neighbours $(i',j') \sim (i,j)$} 
                    \If{$(i',j') \not\in V$} 
                        \State \textbf{push} $(i',j')$ \textbf{to} $N$ 
                    \EndIf 
                \EndFor
            \EndIf
        \EndWhile
        \If{$\bs{G}$ not empty}
            \State $(\dd\x)_{i,j} \leftarrow \nicefrac{s}{|G|} $ for all $(i,j) \in G$,$G \leftarrow \varnothing$, $s=0$
        \EndIf
    \EndWhile
\end{algorithmic}
\end{algorithm}
As we show in Proposition~\ref{prop:forward_}, computing $\textsc{TVmax}$'s forward pass 
can be done by chaining efficient algorithms for TV2D and sparsemax.
From Eq.~\ref{jacobian} we have that \textsc{TVmax}'s Jacobian can be computed by composing 
$\bs{J}_{F}$ and $\bs{J}_{\sparsemax}$. 
As derived in Proposition~\ref{lemma:groups_}, 
$(\bs{J}_{F})_{i,j}=\nicefrac{1}{n_{ij}}$ if
$i$ and  $j$ are fused in a group with $n_{ij}$ elements, and $0$ otherwise.
Thus, the backward pass intuitively involves ``spreading'' the credit assigned to one region across all regions fused with it. This can be
implemented by Alg.~\ref{alg} in 
 $\mathcal{O}(N_g\log k)$ where $N_g$ is the number of groups of fused regions. In the worst case, when there are no fused regions, the complexity is $\mathcal{O}(k\log k$).
This algorithm is inspired by flood filling algorithms \cite{flooffill}.

\section{EXPERIMENTS}
To compare the attention mechanisms in VQA, we use the encoder-decoder version of modular co-attention networks \cite{yu2019deep}. 
To represent the image we use grid features pre-trained by \cite{jiang2020defense} on Visual Genome \cite{krishna2017visual} with a ResNet-152 as backbone (``grid'' in Table~\ref{results_vqa}) or bounding box features extracted with Faster R-CNN \cite{ren2015faster} pretrained on Visual Genome (``bounding boxes'' in Table~\ref{results_vqa}). The different attention mechanisms, softmax, sparsemax, and \textsc{TVmax}, are used in the output attention layer. 
All models were trained on VQA-v2 dataset \cite{vqav2} for $15$ epochs using 
Adam 
\cite{adam} with a learning rate of $\min(2.5t\cdot 10^{-5}$, $1\cdot 10^{-4})$ 
when using 
bounding boxes and $\min(2.5t\cdot 10^{-5}$, $5\cdot 10^{-5})$ for grid features, where $t$ is the epoch number. After 10 epochs, the learning rate is multiplied by $\nicefrac{1}{5}$ every 2 epochs. We set $\lambda=0.01$ for \textsc{TVmax}.

\paragraph{Results. }
When using bounding box features, sparsemax outperforms softmax, suggesting that a sparse selection of relevant bounding boxes leads to more accurate answers.
When using grid features as input, the model using \textsc{TVmax} attention outperforms all other models. This shows that having sparse attention in conjunction to encouraging the selection of contiguous regions, not only improves interpretability but also leads to an accuracy improvement in VQA. 
Moreover, the superior result of \textsc{TVmax} when compared to sparsemax
corroborates our premise that selecting contiguous regions of the image is beneficial.
We can also see that, as stated in \cite{jiang2020defense}, grid features outperform bounding box features.

\paragraph{Human attention. }
Finally, to understand if \textsc{TVmax} leads to higher interpretability, we compared the attention distributions obtained using the different transformations with human attention. To do so, we used the VQA-HAT dataset \cite{vqahat}, where human attention is obtained by having annotators unblurring the relevant regions of the images. 
To compare the attention distributions with the human attention we used the Spearman's rank correlation and the Jensen-Shannon divergence (JS). 
\begin{table}[h]
\centering \small
\begin{tabular}{lcc}
\toprule
& Spearman & JS divergence \\
\midrule
softmax & 0.33 & 0.64  \\
sparsemax & 0.32 & 0.66 \\
\textsc{TVmax} & \textbf{0.37} & \textbf{0.62} \\
\bottomrule
\end{tabular}
\caption{Spearman correlation and JS divergence between attention distributions obtained with the different models and human attention.} 
\label{results_human_attention}
\end{table}
As shown in Table~\ref{results_human_attention}, 
the attention distributions obtained with \textsc{TVmax} are more similar to human attention than 
with softmax and sparsemax.
This indicates that \textsc{TVmax} leads to more interpretable attention distributions. 

\section{CONCLUSIONS}
\label{sect:conclusions}
We propose using sparse and structured visual attention to improve the process of selecting the relevant features. For that, we used sparsemax and introduced \textsc{TVmax}.
By selecting only relevant compact groups of features, \textsc{TVmax} leads to more interpretable attention distributions, as shown by the higher similarity to human attention. Our experiments in VQA show improvements in accuracy when replacing softmax by sparsemax to attend over bounding boxes and when using \textsc{TVmax} to attend over grid features.

\section{ACKNOWLEDGMENTS}
This work was supported by the ERC StG DeepSPIN 758969, by the FCT
through contract UIDB/50008/2020 and contract PD/BD/150633/2020 in the scope of the  Doctoral Program  FCT - PD/00140/2013 NETSyS, and  Lisboa 2020 through ERDF, within project TRAINER (Nº 045347). We thank Pedro M. Q. Aguiar for helpful discussion and feedback.

\bibliographystyle{IEEE}
\bibliography{icip}

\begin{thebibliography}{10}

\bibitem{bahdanau2014neural}
Dzmitry Bahdanau, Kyunghyun Cho, and Yoshua Bengio,
\newblock ``\href{https://arxiv.org/abs/1409.0473} {Neural machine translation
  by jointly learning to align and translate},''
\newblock in {\em Proc. ICLR}, 2015.

\bibitem{vaswani2017attention}
Ashish Vaswani, Noam Shazeer, Niki Parmar, Jakob Uszkoreit, Llion Jones,
  Aidan~N Gomez, {\L}ukasz Kaiser, and Illia Polosukhin,
\newblock ``\href{https://arxiv.org/abs/1706.03762}{Attention is all you
  need},''
\newblock in {\em Proc. NeurIPS}, 2017.

\bibitem{bottom}
Peter Anderson, Xiaodong He, Chris Buehler, Damien Teney, Mark Johnson, Stephen
  Gould, and Lei Zhang,
\newblock
  ``\href{http://openaccess.thecvf.com/content_cvpr_2018/CameraReady/1163.pdf}{Bottom-Up
  and Top-Down Attention for Image Captioning and Visual Question Answering},''
\newblock in {\em Proc. CVPR}, 2018.

\bibitem{tan2019lxmert}
Hao Tan and Mohit Bansal,
\newblock ``\href{https://arxiv.org/pdf/1908.07490.pdf}{LXMERT: Learning
  Cross-Modality Encoder Representations from Transformers},''
\newblock in {\em Proc. EMNLP}, 2019.

\bibitem{yu2019deep}
Zhou Yu, Jun Yu, Yuhao Cui, Dacheng Tao, and Qi~Tian,
\newblock
  ``\href{http://openaccess.thecvf.com/content_CVPR_2019/papers/Yu_Deep_Modular_Co-Attention_Networks_for_Visual_Question_Answering_CVPR_2019_paper.pdf}{Deep
  Modular Co-Attention Networks for Visual Question Answering},''
\newblock in {\em Proc. CVPR}, 2019.

\bibitem{chen2020uniter}
Yen-Chun Chen, Linjie Li, Licheng Yu, Ahmed~El Kholy, Faisal Ahmed, Zhe Gan,
  Yu~Cheng, and Jingjing Liu,
\newblock ``\href{https://arxiv.org/pdf/1909.11740v3.pdf}{UNITER: UNiversal
  Image-TExt Representation Learning},''
\newblock in {\em Proc. ECCV}, 2020.

\bibitem{jiang2020defense}
Huaizu Jiang, Ishan Misra, Marcus Rohrbach, Erik Learned-Miller, and Xinlei
  Chen,
\newblock ``In defense of grid features for visual question answering,''
\newblock in {\em Proc. CVPR}, 2020.

\bibitem{krishna2017visual}
Ranjay Krishna, Yuke Zhu, Oliver Groth, Justin Johnson, Kenji Hata, Joshua
  Kravitz, Stephanie Chen, Yannis Kalantidis, Li-Jia Li, David~A Shamma,
  et~al.,
\newblock ``\href{https://arxiv.org/pdf/1602.07332.pdf}{Visual genome:
  Connecting language and vision using crowdsourced dense image annotations},''
\newblock {\em IJCV}, 2017.

\bibitem{sparsemax}
Andre Martins and Ramon Astudillo,
\newblock ``\href{http://proceedings.mlr.press/v48/martins16.pdf}{From Softmax
  to Sparsemax: A Sparse Model of Attention and Multi-Label Classification},''
\newblock in {\em Proc. ICML}, 2016.

\bibitem{malaviya2018sparse}
Chaitanya Malaviya, Pedro Ferreira, and Andr{\'e}~FT Martins,
\newblock ``\href{https://www.aclweb.org/anthology/P18-2059.pdf}{Sparse and
  Constrained Attention for Neural Machine Translation},''
\newblock in {\em Proc. ACL}, 2018.

\bibitem{peters2019sparse}
Ben Peters, Vlad Niculae, and Andr{\'e}~FT Martins,
\newblock ``\href{https://www.aclweb.org/anthology/P19-1146.pdf}{Sparse
  Sequence-to-Sequence Models},''
\newblock {\em In Proc. ACL}, 2019.

\bibitem{bach2012optimization}
Francis Bach, Rodolphe Jenatton, Julien Mairal, Guillaume Obozinski, et~al.,
\newblock ``Optimization with sparsity-inducing penalties,''
\newblock {\em Foundations and Trends in Machine Learning}, 2012.

\bibitem{tibshirani2005sparsity}
Robert Tibshirani, Michael Saunders, Saharon Rosset, Ji~Zhu, and Keith Knight,
\newblock
  ``\href{https://web.stanford.edu/group/SOL/papers/fused-lasso-JRSSB.pdf}{Sparsity
  and smoothness via the fused lasso},''
\newblock {\em Journal of the Royal Statistical Society}, 2005.

\bibitem{vlad}
Vlad Niculae and Mathieu Blondel,
\newblock ``\href{https://arxiv.org/pdf/1705.07704.pdf}{A Regularized Framework
  for Sparse and Structured Neural Attention},''
\newblock in {\em Proc. NeurIPS}, 2017.

\bibitem{gfl}
Bo~Xin, Yoshinobu Kawahara, Yizhou Wang, Lingjing Hu, and Wen Gao,
\newblock ``\href{http://doi.acm.org/10.1145/2847421}{ Efficient generalized
  fused lasso and its applications},''
\newblock {\em ACM TIST}, 2016.

\bibitem{taut}
P.~Laurie Davies and Arne Kovac,
\newblock ``\href{https://projecteuclid.org/euclid.aos/996986501}{Local
  extremes, runs, strings and multiresolution},''
\newblock {\em The Annals of Statistics}, 2001.

\bibitem{sra}
\'Alvaro Barbero and Suvrit Sra,
\newblock ``\href{http://arxiv.org/abs/1411.0589}{Modular proximal optimization
  for multidimensional total-variation regularization},''
\newblock 2014.

\bibitem{condat}
Laurent Condat,
\newblock ``\href{http://ieeexplore.ieee.org/document/6579659/} {A direct
  algorithm for 1-d total variation denoising},''
\newblock {\em IEEE Signal Processing Letters}, 2013.

\bibitem{copt}
Gideon~Dresdner Fabian~Pedregosa, Geoffrey~Negiar,
\newblock ``\href{copt: composite optimization in
  Python}{http://openopt.github.io/copt/},''
\newblock 2020.

\bibitem{decomposing}
Yaoliang Yu,
\newblock
  ``\href{https://papers.nips.cc/paper/4863-on-decomposing-the-proximal-map.pdf}{On
  decomposing the proximal map},''
\newblock in {\em Proc. NeurIPS}, 2013.

\bibitem{pathwise}
Jerome Friedman, Trevor Hastie, Holger H{\"o}fling, and Robert Tibshirani,
\newblock ``\href{https://arxiv.org/abs/0708.1485} {Pathwise coordinate
  optimization},''
\newblock {\em The Annals of Applied Statistics}, 2007.

\bibitem{viallon:hal-00813281}
Vivian Viallon, Sophie Lambert-Lacroix, Holger H{\"o}fling, and Franck Picard,
\newblock
  ``\href{https://hal.archives-ouvertes.fr/hal-00813281/file/VLHP13_1.5.pdf}{Adaptive
  Generalized Fused-Lasso: Asymptotic Properties and Applications},''
\newblock 2013.

\bibitem{flooffill}
Sergei Burtsev and Ye.P. Kuzmin,
\newblock ``An efficient flood-filling algorithm,''
\newblock {\em Computers \& Graphics}, 1993.

\bibitem{ren2015faster}
Shaoqing Ren, Kaiming He, Ross Girshick, and Jian Sun,
\newblock
  ``\href{http://papers.nips.cc/paper/5638-faster-r-cnn-towards-real-time-object-detection-with-region-proposal-networks.pdf}{Faster
  r-cnn: Towards real-time object detection with region proposal networks},''
\newblock in {\em Proc. NIPS}, 2015.

\bibitem{vqav2}
Yash Goyal, Tejas Khot, Douglas Summers-Stay, Dhruv Batra, and Devi Parikh,
\newblock ``\href{https://arxiv.org/pdf/1612.00837.pdf}{Making the V in VQA
  matter: Elevating the role of image understanding in Visual Question
  Answering},''
\newblock in {\em Proc. CVPR}, 2017.

\bibitem{adam}
Diederik~P. Kingma and Jimmy Ba,
\newblock ``\href{http://arxiv.org/abs/1412.6980}{Adam: A Method for Stochastic
  Optimization.},''
\newblock 2014.

\bibitem{vqahat}
Abhishek Das, Harsh Agrawal, C.~Lawrence Zitnick, Devi Parikh, and Dhruv Batra,
\newblock ``\href{https://arxiv.org/pdf/1606.03556.pdf}{Human Attention in
  Visual Question Answering: Do Humans and Deep Networks Look at the Same
  Regions?},''
\newblock in {\em Proc. EMNLP}, 2016.

\end{thebibliography}

\appendix
\onecolumn

\section{EXAMPLES}
\label{examples}
Additional VQA examples, using the softmax, sparsemax, and \textsc{TVmax} attention, are presented in Figures \ref{plot_att_vqa_roses}, \ref{beagles}, \ref{church}, and \ref{ice_cream}.

\begin{figure}[h!]
\begin{center}
\includegraphics[width=16cm]{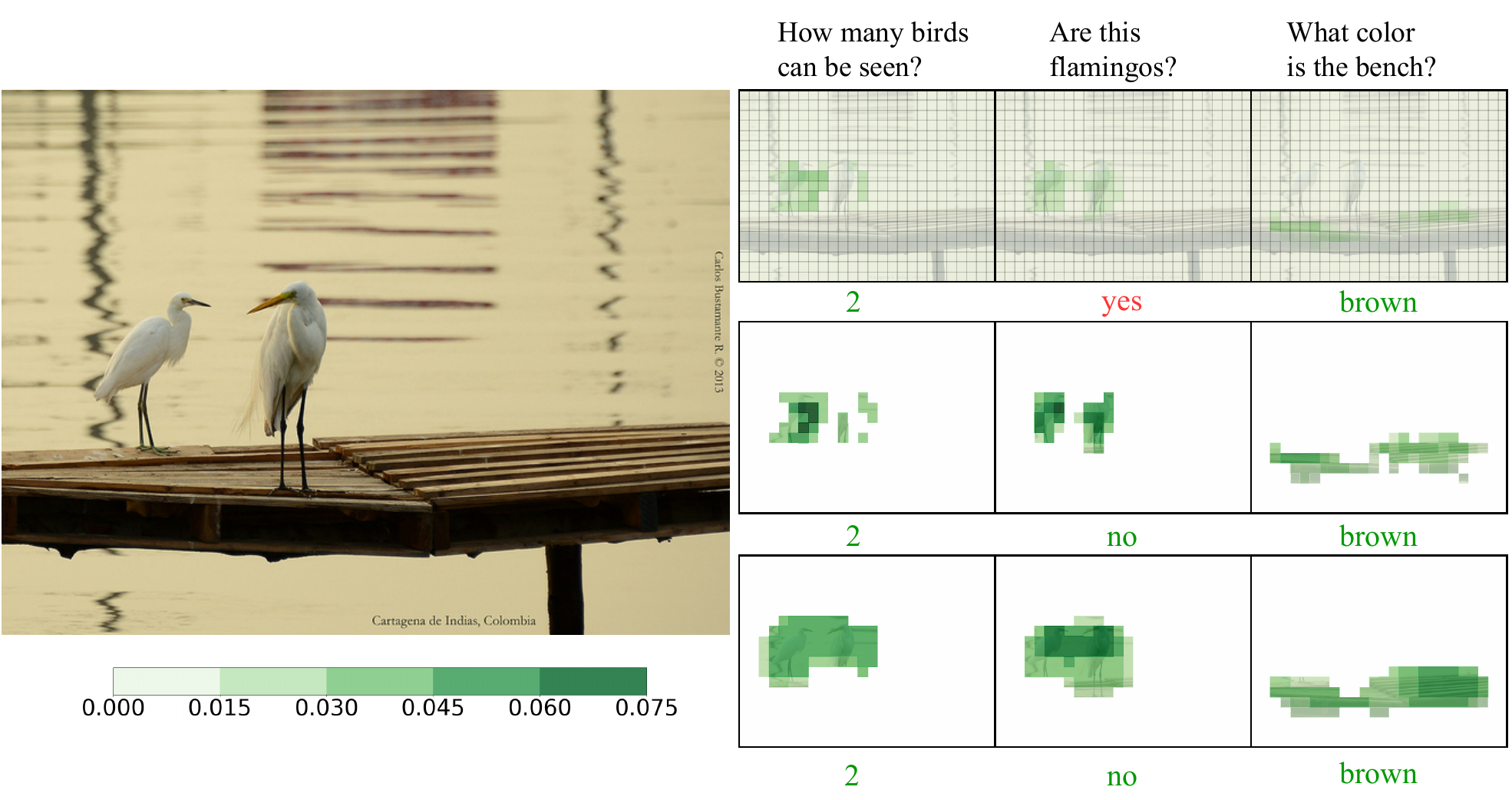}
\end{center}
\caption{VQA using softmax (top), sparsemax
(middle) and \textsc{TVmax} attention (bottom). 
Shading denotes the
attention weight, with white for zero attention.}
\label{plot_att_vqa_roses}
\end{figure}

\begin{figure}[h!]
\begin{center}
\includegraphics[width=16cm]{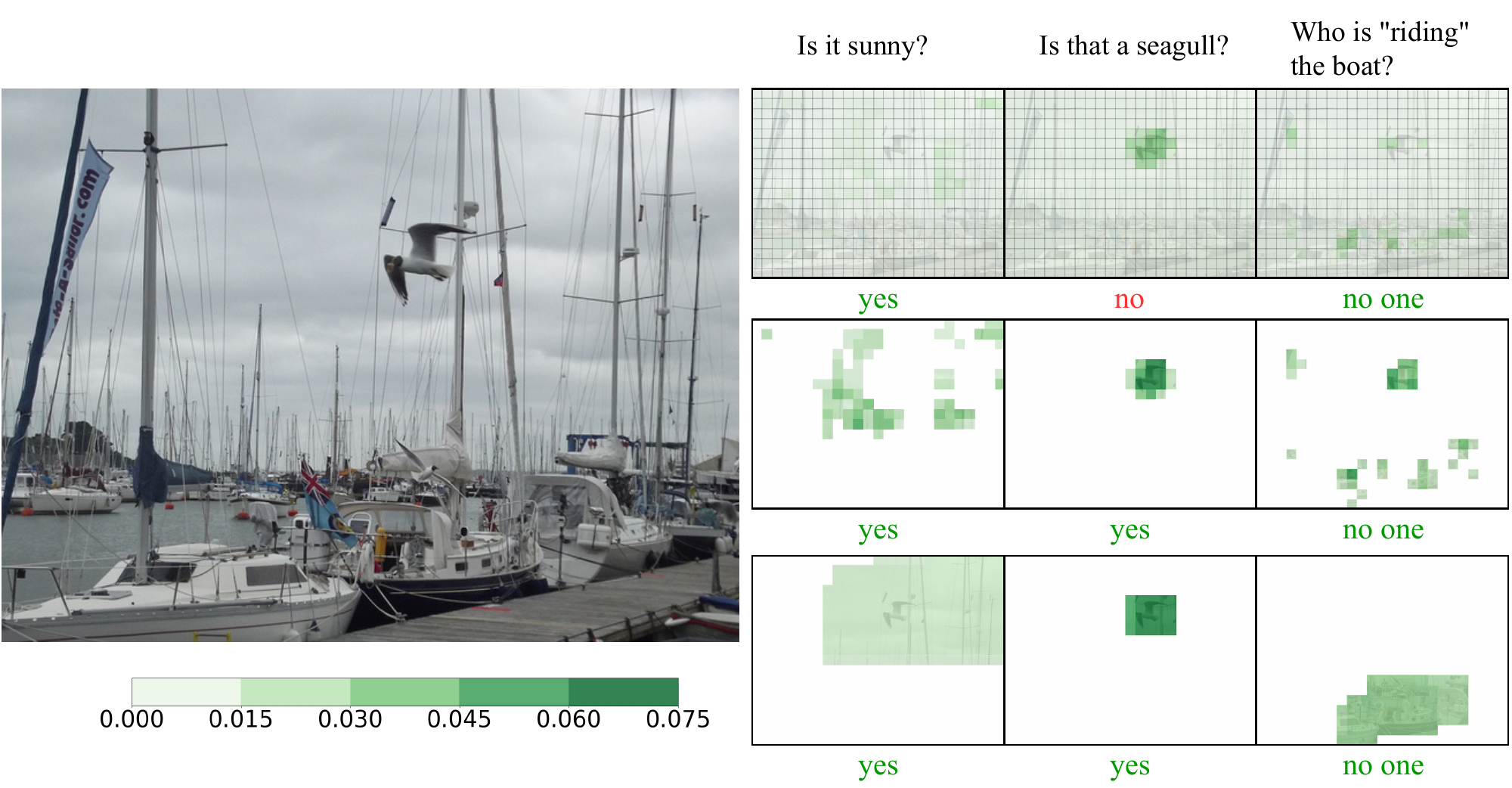}
\end{center}
\caption{VQA using softmax (top), sparsemax
(middle) and \textsc{TVmax} attention (bottom). 
Shading denotes the
attention weight, with white for zero attention.}
\label{beagles}
\end{figure}

\begin{figure}[h!]
\begin{center}
\includegraphics[width=16cm]{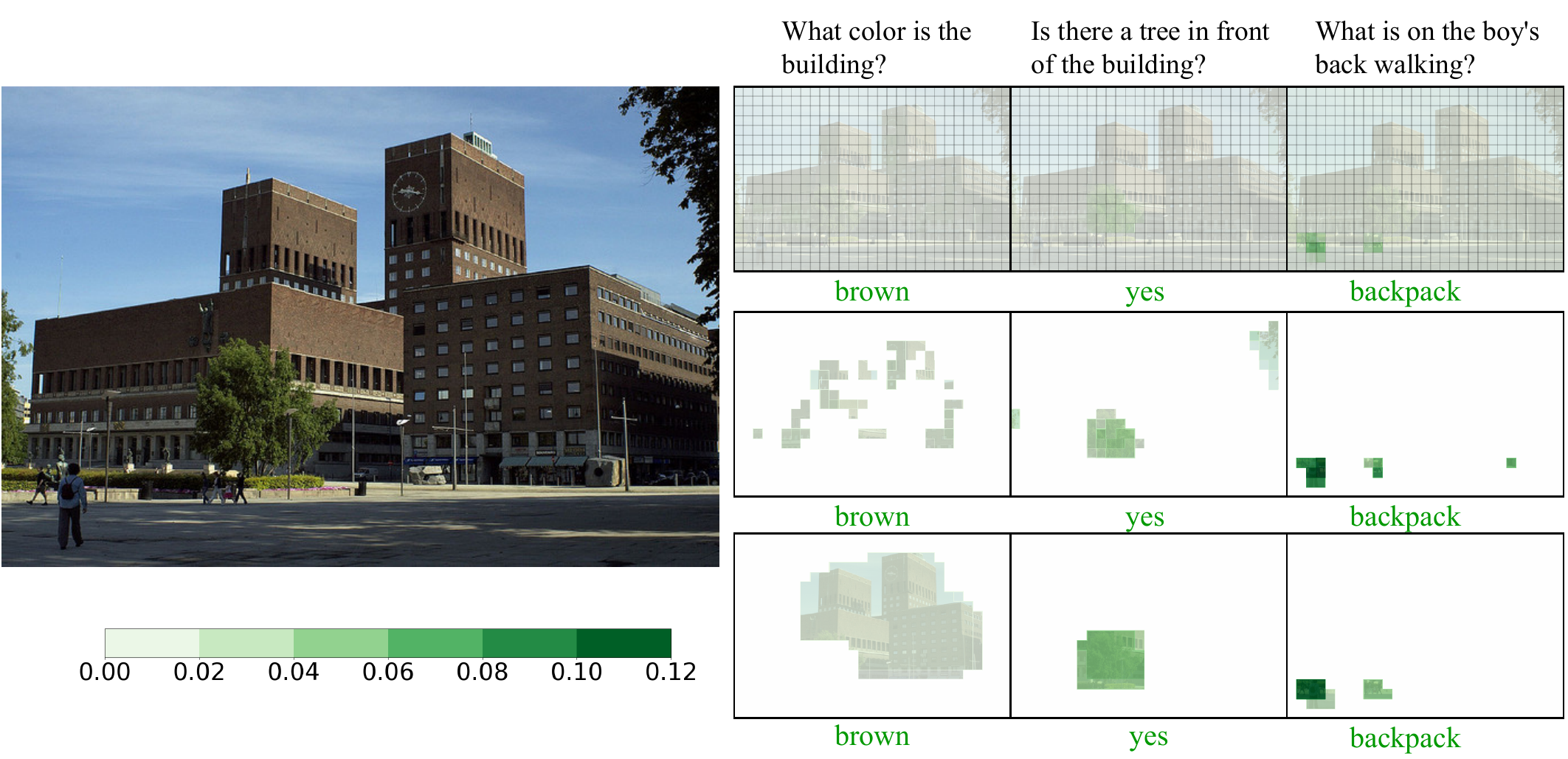}
\end{center}
\caption{VQA using softmax (top), sparsemax
(middle) and \textsc{TVmax} attention (bottom). 
Shading denotes the
attention weight, with white for zero attention.}
\label{church}
\end{figure}

\begin{figure}[h!]
\begin{center}
    \includegraphics[width=16cm]{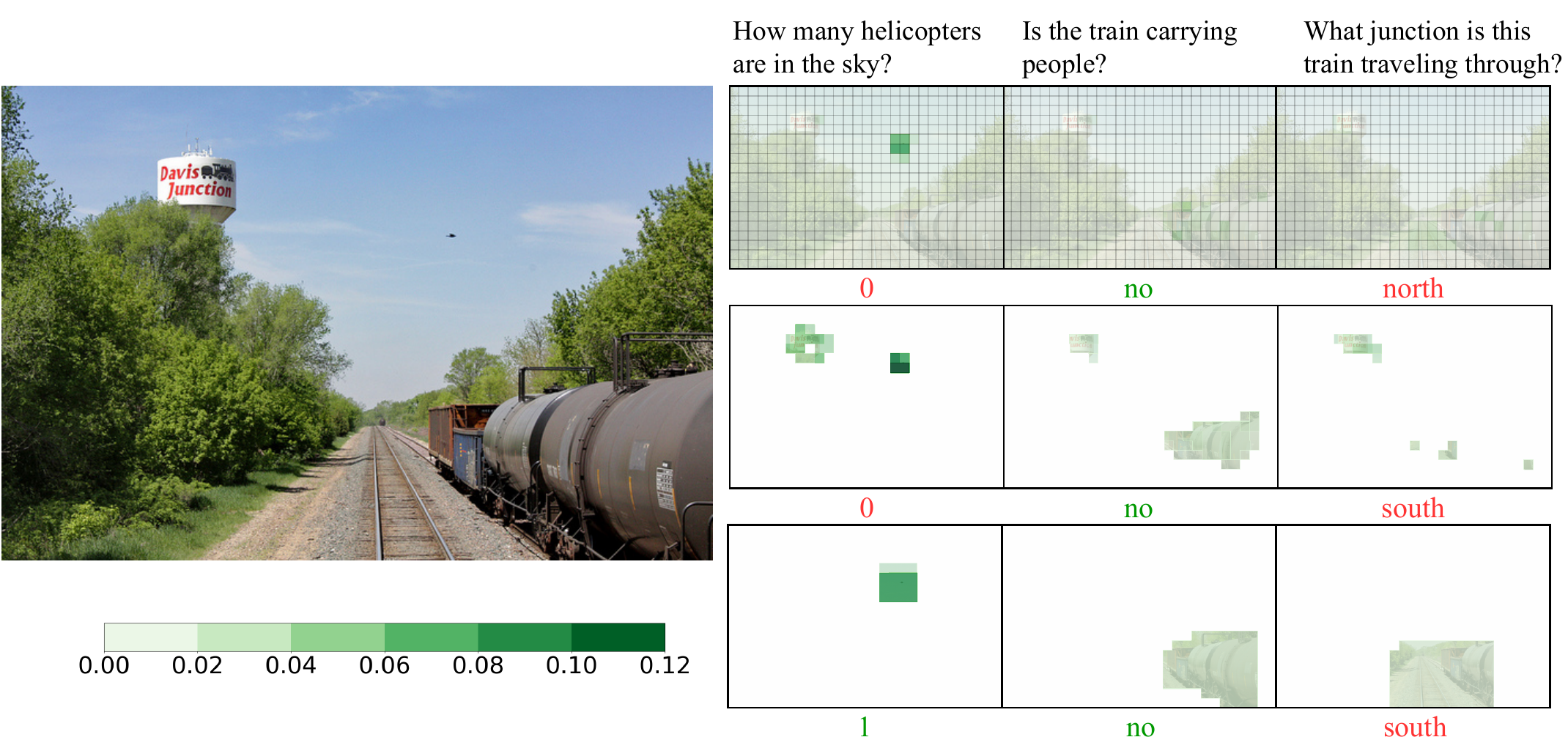}
\end{center}
\caption{VQA using softmax (top), sparsemax
(middle) and \textsc{TVmax} attention (bottom). 
Shading denotes the
attention weight, with white for zero attention.}
\label{ice_cream}
\end{figure}

\section{HUMAN ATTENTION EXAMPLES}
\label{human_attention_examples}
We present in Figure~\ref{human_attention_fig} some images of the VQA-v2 validation set with the corresponding human attention from the VQA-HAT dataset and the attention distributions obtained with the different attention mechanisms: softmax, sparsemax, and \textsc{TVmax}.

\begin{figure}
\begin{center}
    \includegraphics[width=16cm]{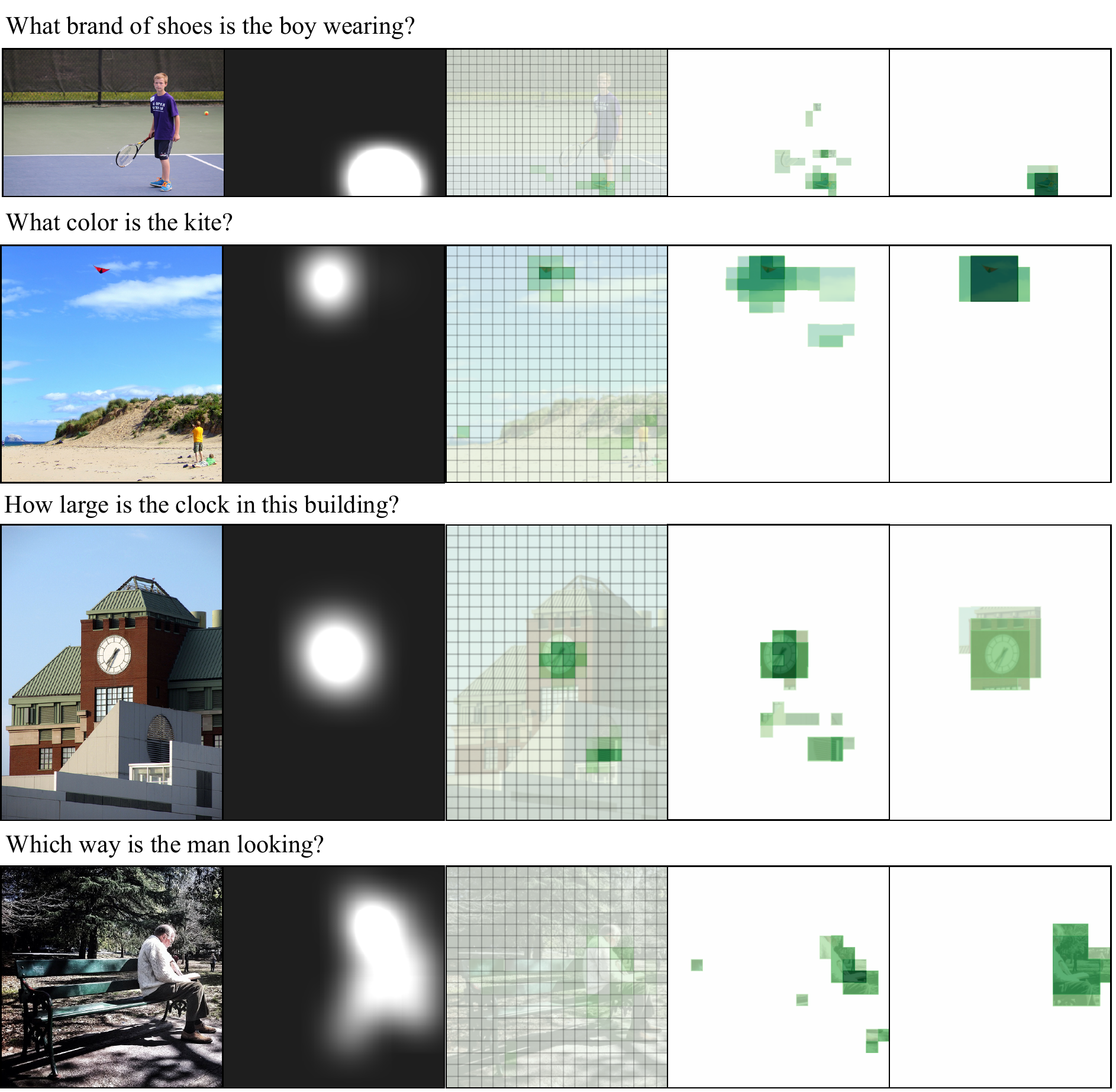}
\end{center}
\caption{Examples of human attention and the attention distributions obtained with the different attention mechanisms. The original image in the left, followed by human attention, softmax attention, sparsemax attention, and \textsc{TVmax} attention.}
\label{human_attention_fig}
\end{figure}

\end{document}